\theoremstyle{plain}
\newtheorem{theorem}{Theorem}[section]
\newtheorem{lemma}{Lemma}[section]
\theoremstyle{definition}
\newtheorem{definition}{Definition}[section]
\newtheorem{remark}{Remark}
\newtheorem{claim}{~~~~Claim}
\numberwithin{equation}{section}
\journal{Soft Computing}
\begin{document}

\begin{frontmatter}
\title{On the algebraic structures of the space of interval-valued intuitionistic fuzzy numbers\tnoteref{mytitlenote}}
\tnotetext[mytitlenote]{This work was supported by the National Natural Science Foundation of China
(Nos.~11601449 and 71771140), the Key Natural Science Foundation of Universities in Guangdong Province
(No.~2019KZDXM027), and the Youth Science and Technology Innovation Team of Southwest Petroleum University for Nonlinear
Systems (No.~2017CXTD02).}


\author[a2,a3]{Xinxing Wu\corref{mycorrespondingauthor}}
\cortext[mycorrespondingauthor]{Corresponding author}
\address[a2]{School of Sciences, Southwest Petroleum University, Chengdu, Sichuan 610500, China}
\address[a3]{Zhuhai College of Jilin University, Zhuhai, Guangdong 519041, China}
\ead{wuxinxing5201314@163.com}

\author[a2]{Chaoyue Tan}
\ead{q520713277@163.com}

\author[a4]{G\"{u}l Deniz \c{C}ayl{\i}}
\address[a4]{Department of Mathematics, Faculty of Science, Karadeniz Technical University, 61080 Trabzon, Turkey}
\ead{guldeniz.cayli@ktu.edu.tr}

\author[a5]{Peide Liu}
\address[a5]{School of Management Science and Engineering, Shandong University
of Finance and Economics, Jinan, Shandong 250014, China}
\ead{peide.liu@gmail.com}

\begin{abstract}
This study is inspired by those of Huang et al.~(Soft Comput. 25,
2513--2520, 2021) and Wang et al.~(Inf. Sci. 179, 3026--3040, 2009) in
which some ranking techniques for interval-valued intuitionistic fuzzy
numbers (IVIFNs) were introduced. In this study, we prove that
the space of all IVIFNs with the relation in the method for comparing
any two IVIFNs based on a score function and three types of entropy
functions is a complete chain and obtain that this relation is an
admissible order. Moreover, we demonstrate that IVIFNs are
complete chains to the relation in the comparison method for
IVIFNs on the basis of score, accuracy, membership uncertainty index, and
hesitation uncertainty index functions.
\end{abstract}
\begin{keyword}
Interval-valued intuitionistic fuzzy numbers~(IVIFNs); Complete lattice; Entropy function;
Score function.
\end{keyword}
\end{frontmatter}

\section{Introduction}

Zadeh~\cite{zadeh} proposed the notion of fuzzy sets (FSs) characterized by
a membership function, which appoints to each object a grade of membership
ranging between zero and one. Since Zadeh's efficient study, some authors
presented the generalization of the FSs, such as the concepts of type-1
fuzzy sets (T1FSs), interval type-2 fuzzy sets (IT2FSs), generalized type-2
fuzzy sets (GT2FSs) (see {\cite{Ata2017,CM2008,C2014,G2017,P2016}). This
generalization} has induced substantial theoretical improvements and several
applications \cite{K2004,Z2005}, and ensures a significant alternative other
than probability theory to characterize uncertainty, imprecision, and
vagueness in numerous areas \cite{Z2008}.

Atanassov \cite{Ata1986} introduced the intuitionistic fuzzy sets (IFSs) as
a natural extension of Zadeh's FSs, also called the Atanassov's
intuitionistic fuzzy sets (AIFSs). In the theory of IFSs, both the degree of
membership and the degree of non-membership are given to each element of the
universe, and their sum is smaller than or equal to $1$. Hence, IFSs are
more effective than FSs in handling with the uncertainty and fuzziness of
systems. Atanasov \cite{Ata2017} discussed type-1 fuzzy sets (T1FSs) that
form the base of FSs extensions. Furthermore, a comparison between T1FSs and
IFSs was presented in \cite{Ata2017}, and their some applications were
studied in \cite{AR2018,S2006}. Atanassov and Gargov \cite{AG1989} let the
membership and nonmembership functions be interval values rather than single
values. Thereby, they further generalized IFSs to the situation of
interval-valued intuitionistic fuzzy sets (IVIFSs), which have been
efficaciously used in the fields of supply and investment decision-making
\cite{TC2017,ZT2020}. Atanasov \cite{Ata1994,Ata2020} also defined some different
functional laws of IVIFSs.

In recent times, Xu~\cite{X2007} applied the score function and accuracy
function to compare interval-valued intuitionistic fuzzy numbers (IVIFNs).
Subsequently, Ye \cite{ye2009}, Nayagam et al.~\cite{N2011}, Sahin~\cite%
{SA2016}, Zhang and Xu~\cite{ZX2017} and Nayagam et al.~\cite{N2017}
provided some additional accuracy functions to compare IVIFNs and developed
multicriteria fuzzy decision-making techniques. It is well known that these
approaches cannot identify the difference between two arbitrary IVIFNs in
some situations due to the particular properties of intervals. For this
reason, to explore the difference between two IVIFNs, Wang et al.~\cite%
{WLW2009} proposed two new functions: the membership uncertainty index and
the hesitation uncertainty index. Furthermore, a complete ranking method for
two arbitrary IVIFNs using these functions was developed. Later, Huang et
al.~\cite{HZX2021} introduced a novel comparison technique for IVIFNs, which
tells the difference between any two IVIFNs. Their method exploits a score
function and three types of entropy functions, where the entropies are
crucial concepts for measuring the fuzziness of uncertain information.

This work is motivated by those of Huang et al.~\cite{HZX2021} and Wang et
al.~\cite{WLW2009} in which more functional and more reasonable techniques
for comparing two IVIFNs were proposed. The main of this paper is to enhance
some results presented in the mentioned above works by providing new
theorems related to the methodologies to compare two IVIFNs. In this paper,
considering a score function and three types of entropy functions, we
demonstrate that IVIFNs with the order in the procedure to compare any two
IVIFNs proposed by~\cite{HZX2021}, denoted by ``$\leq _{_{\mathrm{HZX}}}$", are
complete totally order sets. Furthermore, we observe that the order $\leq _{_{\mathrm{HZX}%
}}$ is an admissible order on IVIFNs. Moreover, taking into account a
prioritized sequence of score, accuracy, membership uncertainty index, and
hesitation uncertainty index functions, we show that IVIFNs are complete
totally order sets to the order in the ranking principle for IVIFNs given by
\cite{WLW2009}, denoted by $\leq _{_{\mathrm{WLW}}}$.

The rest of this study is organized as follows. Section~\ref{Sec-2} reviews some basic
concepts related to IVIFSs and some classical score functions and accuracy
functions on them. In Section~\ref{Sec-3}, we reveal that IVIFNs are complete totally
order sets, regarding the order $\leq _{_{\mathrm{HZX}}}$ based on a score
function and three types of entropy functions and prove that $\leq _{_{\mathrm{HZX}}}$
is an admissible order on IVIFNs. In Section~\ref{Sec-4}, we propose that IVIFNs with
the order $\leq _{_{\mathrm{WLW}}}$ on the basis of score, accuracy,
membership uncertainty index, and hesitation uncertainty index functions are
complete totally order sets. Section~\ref{Sec-5} provides concluding remarks of this
research.

\section{Preliminaries}\label{Sec-2}

\subsection{Lattices}

In the following, some basic notions and results about lattices are
recalled. These terms will be used in the sequel.

\begin{definition}[\textrm{\protect\cite{Bir1967}}]
A binary relation $\preceq $ defined on a non-empty set $L$ is called a
\textit{partial order} on the set $L$ if, for all $a,b,c\in L,$ it satisfies the
following properties:
\begin{enumerate}[(1)]
\item (Reflexivity) $a\preceq a$.

\item (Antisymmetry) If $a\preceq b$ and $b\preceq a$, then $a=b$.

\item (Transitivity) If $a\preceq b$ and $b\preceq c$, then $a\preceq c$.
\end{enumerate}
\end{definition}

If, for all $a,b\in L$, either $a\preceq b$ or $b\preceq a$, then we say $%
\preceq $ is a \textit{total order} on $L$. A non-empty set with a partial order on it
is called a \textit{partially ordered set}, or more briefly a \textit{poset}. And if the
relation is a total order, then we say it a \textit{totally ordered set} or simply a
\textit{chain}.

A poset $(L,\preceq ,\bm{0},\bm{1})$ is \textit{bounded} if it has top and bottom
elements, which are denoted as $\bm{1}$ and $\bm{0}$, respectively; that is,
two elements $\bm{1}$, $\bm{0}\in L$ exist such that $\bm{0}\preceq a\preceq %
\bm{1}$ for all $a\in L$.

Let $(L,\preceq )$ be a poset and $X\subset L$. An element $u\in L$ is said
to be an \textit{upper bound} of $X$ if $x\preceq u$ for all $x\in X$. An
upper bound $u$ of $X$ is said to be its \textit{smallest upper bound} or
\textit{supremum}, written as $\bigvee X$ or $\sup X$, if $u\preceq y$ for
each upper bound $y$ of $X$. Dually, we can define the \textit{greatest
lower bound} or \textit{infimum} of $X$, written as $\bigwedge X$ or $\inf X$%
. In the case of pairs of elements, it is customary to write
\begin{equation}
x\vee y=\sup \{x,y\}\text{ and }x\wedge y=\inf \{x,y\}.
\label{sup-inf-operation}
\end{equation}

\begin{definition}[\textrm{\protect\cite{Bir1967}}]
\label{Lattice-Def-1} A \textit{lattice }$L$ is a poset in which for all $%
a,b\in L$ the set $\{a,b\}$ has a supremum and an infimum. If there are
elements $\bm{0}$ and $\bm{1}$ in $L$ such that $\bm{0}\preceq a\preceq %
\bm{1}$ for all $a\in L$, then $L$ is called a \textit{bounded lattice}.
\end{definition}

\begin{definition}[{\textrm{\protect\cite[Definition~O-2.1]{GHKLMS2003}}}]
A lattice $L$ is called \textit{complete} if, for any subset $A$ of $L$, the
greatest lower bound and the smallest upper bound of $A$ exist. A totally
ordered complete lattice is called a \textit{complete chain}.
\end{definition}

\begin{lemma}[\textrm{\protect\cite{Bir1967}}]
\label{Complete-Char} Let $L$ be a bounded lattice. Then the following
statements are equivalent:
\begin{enumerate}[{\rm (i)}]
\item $L$ is a complete lattice;

\item Every nonempty subset of $L$ has an infimum;

\item Every nonempty subset of $L$ has a supremum.
\end{enumerate}
\end{lemma}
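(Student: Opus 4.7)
The plan is to establish the cycle of implications by showing that (i) trivially implies both (ii) and (iii), and then proving the two nontrivial converses (ii) $\Rightarrow$ (i) and (iii) $\Rightarrow$ (i). The implications outward from (i) are immediate, since by the definition of a complete lattice recalled just above, every subset of $L$ (hence certainly every nonempty subset) admits both a greatest lower bound and a least upper bound.

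For the main content, (ii) $\Rightarrow$ (i), I would argue as follows. Given an arbitrary subset $A \subseteq L$, I need to produce both $\bigwedge A$ and $\bigvee A$. If $A$ is nonempty, then $\bigwedge A$ exists directly by hypothesis (ii). To construct $\bigvee A$, I would pass to the set of upper bounds
\[
U = \{u \in L : a \preceq u \text{ for all } a \in A\}.
\]
Boundedness of $L$ gives $\bm{1} \in U$, so $U$ is nonempty, and therefore $\bigwedge U$ exists by (ii). The key verification is that $\bigwedge U$ is itself an upper bound of $A$: each $a \in A$ is a lower bound of $U$ by the very definition of $U$, so since $\bigwedge U$ is the \emph{greatest} lower bound of $U$, we conclude $a \preceq \bigwedge U$ for every $a \in A$. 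Hence $\bigwedge U \in U$, which means $\bigwedge U$ is the smallest upper bound of $A$, i.e.\ $\bigvee A = \bigwedge U$.

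The empty subset is handled separately via boundedness: every element of $L$ is vacuously both an upper and a lower bound of $\emptyset$, so $\bigvee \emptyset = \bm{0}$ and $\bigwedge \emptyset = \bm{1}$, both of which exist because $L$ is bounded. The implication (iii) $\Rightarrow$ (i) is exactly dual: for nonempty $A$, apply (iii) to the set of lower bounds of $A$, which is nonempty since $\bm{0}$ lies in it, and check symmetrically that the resulting supremum is a lower bound of $A$ and therefore equals $\bigwedge A$.

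The step I expect to require the most care is the verification inside (ii) $\Rightarrow$ (i) that $\bigwedge U$ belongs to $U$, since it is tempting to assert $\bigvee A = \bigwedge U$ directly without checking that $\bigwedge U$ is even an upper bound of $A$. This is the only nonformal part of the argument; once that definitional check is made, the remaining work is bookkeeping around the bounded hypothesis to handle the empty subset and to guarantee that the auxiliary sets $U$ (and its dual) are nonempty so that (ii) (respectively (iii)) can be applied.
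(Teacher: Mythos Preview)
Your proof is correct and is the standard argument for this classical characterisation. Note, however, that the paper does not supply its own proof of this lemma: it is stated with a citation to Birkhoff's \emph{Lattice Theory} and used as a black box (in the proof of Theorem~\ref{Completeness}) without further justification. So there is no ``paper's proof'' to compare against; your argument simply fills in what the paper delegates to the reference, and does so in the expected way.
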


\subsection{Interval-valued intuitionistic fuzzy sets}

In the following, we present classical concepts of intuitionistic fuzzy sets
(IFSs) and interval-valued intuitionistic fuzzy sets (IVIFSs) as well as
classical score functions and accuracy functions on IVIFSs to be helpful
future discussions.

\begin{definition}[{\textrm{\protect\cite[Definition 2.3]{Ata1999}}}]
Let $X$ be the universe of discourse. An \textit{interval-valued
intuitionistic fuzzy set} (IVIFS) $A$ in $X$ is defined as an object having
the following form
\begin{equation}
A=\left\{ \langle x,\mu _{A}(x),\nu _{A}(x)\rangle \mid x\in X\right\} ,
\label{eq-IVIFS-1}
\end{equation}%
where $\mu _{A}(x)=[\mu _{A}^{L}(x),\mu _{A}^{R}(x)]$ and $\nu _{A}(x)=[\nu
_{A}^{L}(x),\nu _{A}^{R}(x)]$ are subintervals of $[0,1]$ denoting the
\textit{membership degree} and \textit{non-membership degree} of the element
$x$ to the set $A$, respectively, which meet the condition of $\mu
_{A}^{R}(x)+\nu _{A}^{R}(x)\leq 1$.
\end{definition}

Clearly, if $\mu_{A}^{L}(x)=\mu_{A}^{R}(x)$ and $\nu_{A}^{L}(x)=%
\nu_{A}^{R}(x)$, then an IVIFS reduces to a traditional IFS.

For an IVIFS $A$, the \textit{indeterminacy degree} $\pi _{A}(x)$ of element
$x$ belonging to the IFS $A$ is defined by $\pi _{A}(x)=[1-\mu
_{A}^{R}(x)-\nu _{A}^{R}(x),1-\mu _{A}^{L}(x)-\nu _{A}^{L}(x)]$. In \cite%
{XC2012}, the pair $\langle \mu _{A}(x),\nu _{A}(x)\rangle $ is called an
\textit{interval-valued intuitionistic fuzzy number} (IVIFN), which was also
called an \textit{interval valued intuitionistic fuzzy pair} (IVIFP) in \cite%
{Ata2020}. For convenience, we use $\alpha =\langle \lbrack \mu _{\alpha
}^{L},\mu _{\alpha }^{R}],[\nu _{\alpha }^{L},\nu _{\alpha }^{R}]\rangle $
to represent an IVIFN, which satisfies
\begin{equation*}
\lbrack \mu _{\alpha }^{L},\mu _{\alpha }^{R}]\subset \lbrack 0,1],\ [\nu
_{\alpha }^{L},\nu _{\alpha }^{R}]\subset \lbrack 0,1],\text{ and }\mu
_{\alpha }^{R}+\nu _{\alpha }^{R}\leq 1,
\end{equation*}%
and use $\tilde{\Theta}$ to denote the set of all IVIFNs.

Xu \cite{X2007}\textrm{\ }introduced the score and accuracy functions for
IVIFNs and used them to compare two IVIFNs. However, there is no general
method which can rank any two arbitrary IVIFNs.

\begin{definition}[\textrm{\protect\cite{X2007}}]
Let $\alpha =\langle \lbrack \mu _{\alpha }^{L},\mu _{\alpha }^{R}],[\nu
_{\alpha }^{L},\nu _{\alpha }^{R}]\rangle $ be an IVIFN. A {\it score function} $S$
is defined as follows
\begin{equation*}
S(\alpha )=\frac{\mu _{\alpha }^{L}+\mu _{\alpha }^{R}}{2}-\frac{\nu
_{\alpha }^{L}+\nu _{\alpha }^{R}}{2},\text{ }S(\alpha )\in \left[ -1,1%
\right] .
\end{equation*}%
Moreover, the {\it accuracy function} is defined as follows%
\begin{equation*}
H(\alpha )=\frac{\mu _{\alpha }^{L}+\mu _{\alpha }^{R}}{2}+\frac{\nu
_{\alpha }^{L}+\nu _{\alpha }^{R}}{2}.
\end{equation*}
\end{definition}

\begin{definition}[\textrm{\protect\cite{X2007}}]
\label{def-xu} Let $\alpha _{1}$ and $\alpha _{2}$ be two IVIFNs. Then, the
ranking principle is defined as follows:
\begin{enumerate}[(1)]
\item If $S(\alpha _{1})<S(\alpha _{2})$, then $\alpha _{1}<\alpha _{2}$;

\item If $S(\alpha _{1})=S(\alpha _{2})$, then

\begin{itemize}
\item $H(\alpha _{1})<H(\alpha _{2})$, then $\alpha _{1}<\alpha _{2}$;

\item $H(\alpha _{1})=H(\alpha _{2})$, then $\alpha _{1}=\alpha _{2}$.
\end{itemize}
\end{enumerate}
\end{definition}

\section{Algebraic structures of $(\tilde{\Theta},\leq _{_{\mathrm{HZX}}})$}
\label{Sec-3}

In Definition {\ref{def-xu}}, a ranking procedure for IVIFNs by using a
score function and an accuracy function was proposed. Unfortunately, because
of the various properties of intervals, the score and accuracy functions
together may not determine the difference between two arbitrary IVIFNs. Hence, Huang
et al.~\cite{HZX2021} introduced a complete ranking method for IVIFNs via a
score function and three kinds of entropy functions, which are essential
notions for measuring the fuzziness of uncertain information. To be more
precise, they proposed the following order $\leq _{_{\mathrm{HZX}}}$ on
IVIFNs, which can rank any two arbitrary IVIFNs. They also proved that it is
a total order on IVIFNs.

\begin{definition}[{\textrm{\protect\cite[Definition 3.1]{HZX2021}}}]
Let $\alpha =\langle \lbrack \mu _{\alpha }^{L},\mu _{\alpha }^{R}],[\nu
_{\alpha }^{L},\nu _{\alpha }^{R}]\rangle $ be an IVIFN. Define the \textit{%
score function} $S$ of $\alpha $
\begin{equation*}
S(\alpha )=\frac{\mu _{\alpha }^{L}+\mu _{\alpha }^{R}}{2}-\frac{\nu
_{\alpha }^{L}+\nu _{\alpha }^{R}}{2}.
\end{equation*}
Moreover, three \textit{entropy functions} of $\alpha $ are defined as
follows:

\begin{itemize}
\item $\displaystyle E_{1}(\alpha )=\frac{1-\mu _{\alpha }^{L}-\mu _{\alpha
}^{R}}{2}+\frac{1-\nu _{\alpha }^{L}-\nu _{\alpha }^{R}}{2}$;

\item $\displaystyle E_{2}(\alpha )=\frac{\mu _{\alpha }^{R}-\mu _{\alpha
}^{L}+\nu _{\alpha }^{R}-\nu _{\alpha }^{L}}{2}$;

\item $\displaystyle E_{3}(\alpha )=\mu _{\alpha }^{R}-\mu _{\alpha }^{L}$.
\end{itemize}
\end{definition}

\begin{remark}
Notice that $E_{1}(\alpha )$ can be replaced by the accuracy function $%
H(\alpha )$ defined by%
\begin{equation*}
H(\alpha )=\frac{\mu _{\alpha }^{L}+\mu _{\alpha }^{R}}{2}+\frac{\nu
_{\alpha }^{L}+\nu _{\alpha }^{R}}{2}.
\end{equation*}
\end{remark}

\begin{definition}[{\textrm{\protect\cite[Definition 3.2]{HZX2021}}}]
\label{Def-order} Let $\alpha _{1}$ and $\alpha _{2}$ be two IVIFNs. Then,
it gets the following ranking principle:
\begin{enumerate}[(1)]
\item If $S(\alpha _{1})<S(\alpha _{2})$, then $\alpha _{1}$ is smaller than
$\alpha _{2}$, denoted by $\alpha _{1}<_{_{\mathrm{HZX}}}\alpha _{2}$;

\item If $S(\alpha _{1})=S(\alpha _{2})$, then

\begin{itemize}
\item $H(\alpha _{1})<H(\alpha _{2})$, then $\alpha _{1}<_{_{\mathrm{HZX}%
}}\alpha _{2}$;

\item $H(\alpha _{1})=H(\alpha _{2})$, then

\begin{itemize}
\item $E_{2}(\alpha _{1})<E_{2}(\alpha _{2})$, then $\alpha _{1}<_{_{\mathrm{%
HZX}}}\alpha _{2}$;

\item $E_{2}(\alpha _{1})=E_{2}(\alpha _{2})$, then
\begin{itemize}
\item $E_{3}(\alpha _{1})<E_{3}(\alpha _{2})$, then $\alpha _{1}<_{_{\mathrm{HZX}%
}}\alpha _{2}$;

\item $E_{3}(\alpha _{1})=E_{3}(\alpha _{2})$, then $\alpha _{1}=\alpha _{2}$.
\end{itemize}
\end{itemize}
\end{itemize}
\end{enumerate}
If $\alpha _{1}<_{_{\mathrm{HZX}}}\alpha _{2}$ or $\alpha _{1}=\alpha _{2}$,
we will denote it by $\alpha _{1}\leq _{_{\mathrm{HZX}}}\alpha _{2}$.
\end{definition}

\begin{theorem}[{\textrm{\protect\cite[Theorem 3.1]{HZX2021}}}]
\label{theo-chain}
Definition~\ref{Def-order} defines a total order on IVIFNs, i.e.,
the ranking principle given by Definition~\ref{Def-order}
gives a complete ranking in any class of IVIFNs.
\end{theorem}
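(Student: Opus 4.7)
The plan is to recognize $\leq_{_{\mathrm{HZX}}}$ as essentially the pullback of the lexicographic order on $\mathbb{R}^{4}$ under the evaluation map $\Phi\colon\tilde{\Theta}\to\mathbb{R}^{4}$ defined by $\Phi(\alpha)=(S(\alpha),H(\alpha),E_{2}(\alpha),E_{3}(\alpha))$. The nested trichotomy in Definition~\ref{Def-order} is exactly the lexicographic comparison of $\Phi(\alpha_{1})$ and $\Phi(\alpha_{2})$, except that the terminal branch asserts $\alpha_{1}=\alpha_{2}$ rather than merely $\Phi(\alpha_{1})=\Phi(\alpha_{2})$. Consequently, I would first show that $\Phi$ is injective; after that, the four order axioms will descend from the corresponding properties of $(\mathbb{R}^{4},\leq_{\mathrm{lex}})$.

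The injectivity of $\Phi$ is the only genuine computation, and I expect it to be the main technical step. From the definitions of $S$, $H$, $E_{2}$, $E_{3}$ one immediately reads off
\[
S(\alpha)+H(\alpha)=\mu_{\alpha}^{L}+\mu_{\alpha}^{R},\qquad H(\alpha)-S(\alpha)=\nu_{\alpha}^{L}+\nu_{\alpha}^{R},
\]
\[
E_{3}(\alpha)=\mu_{\alpha}^{R}-\mu_{\alpha}^{L},\qquad 2E_{2}(\alpha)-E_{3}(\alpha)=\nu_{\alpha}^{R}-\nu_{\alpha}^{L},
\]
so the four parameters $\mu_{\alpha}^{L},\mu_{\alpha}^{R},\nu_{\alpha}^{L},\nu_{\alpha}^{R}$ can be recovered as linear combinations of $S(\alpha),H(\alpha),E_{2}(\alpha),E_{3}(\alpha)$. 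Hence $\Phi(\alpha_{1})=\Phi(\alpha_{2})$ forces $\alpha_{1}=\alpha_{2}$, which in particular confirms that the terminal ``all four coordinates equal'' branch of Definition~\ref{Def-order} is internally consistent.

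With injectivity in hand, the remaining verifications become formal. Reflexivity is immediate since every tuple equals itself, so the definition traverses down to the final equality branch. For totality I would apply trichotomy of the real order successively to $S$, $H$, $E_{2}$, $E_{3}$: at the first level where strict inequality occurs the comparison is decided, and if all four comparisons yield equality then injectivity of $\Phi$ gives $\alpha_{1}=\alpha_{2}$. Antisymmetry follows because any relation $\alpha_{1}<_{_{\mathrm{HZX}}}\alpha_{2}$ is ultimately witnessed by a strict inequality of real numbers at some level, and so cannot hold simultaneously with $\alpha_{2}<_{_{\mathrm{HZX}}}\alpha_{1}$. Transitivity would then be a short case analysis indexed by the levels at which the two given strict comparisons are decided, relying only on the transitivity of $<$ and $=$ on $\mathbb{R}$.
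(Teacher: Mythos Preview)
Your proposal is correct. The paper, however, does not supply its own proof of this theorem: it is quoted as \cite[Theorem~3.1]{HZX2021} and invoked as a known result, so there is no argument in the paper to compare against directly.

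That said, the one substantive step you isolate---the injectivity of $\Phi=(S,H,E_{2},E_{3})$---does appear later in the paper, buried in the proof of Theorem~\ref{Completeness} (case 3.2.1\,ii)). There the authors write the system $S(\alpha)=\xi_{1}$, $H(\alpha)=\xi_{2}$, $E_{2}(\alpha)=\xi_{3}$, $E_{3}(\alpha)=\xi_{4}$ as a $4\times 4$ matrix equation in $(\mu_{\alpha}^{L},\mu_{\alpha}^{R},\nu_{\alpha}^{L},\nu_{\alpha}^{R})$ and compute the determinant to be $1\neq 0$, which is exactly your observation that the four defining parameters of $\alpha$ are linear combinations of $S,H,E_{2},E_{3}$. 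So your argument is in the same spirit as what the paper implicitly relies on; the lexicographic-pullback framing you give is simply a cleaner way to package the remaining verifications of reflexivity, antisymmetry, transitivity, and totality.
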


\begin{remark}
\label{Remark-1} It is easy to see that the bottom and the top elements of $%
\tilde{\Theta}$ are, respectively, $\big\langle\lbrack 0,0],[1,1]\big\rangle$
and $\big\langle\lbrack 1,1],[0,0]\big\rangle$ with respect to the order $%
\leq _{_{\mathrm{HZX}}}$.
\end{remark}

\begin{definition}
\label{sub-order} Let $\alpha _{1}$ and $\alpha _{2}$ be two IVIFNs. A
relation $\subseteq $ on $\tilde{\Theta}$ is defined as: $\alpha \subseteq
\beta $ if and only if $\mu _{\alpha }^{L}\leq \mu _{\beta }^{L}$, $\mu
_{\alpha }^{R}\leq \mu _{\beta }^{R}$, $\nu _{\alpha }^{L}\geq \nu _{\beta
}^{L}$, and $\nu _{\alpha }^{R}\geq \nu _{\beta }^{R}$.
\end{definition}

\begin{definition}[{\textrm{\protect\cite[Definition~4.1]{DeBFIKM2016}}}]
A partial order $\preceq $ on $\tilde{\Theta}$ is said to be an \textit{%
admissible order} if it is a total order and refines the order $\subseteq $
introduced in Definition~\ref{sub-order}, i.e., it is a total order
satisfying that for any $\alpha $, $\beta \in \tilde{\Theta}$, $\alpha
\subseteq \beta $ implies $\alpha \preceq \beta $.
\end{definition}

In the following Theorem \ref{Completeness}, by applying a score function
and three types of entropy functions, we demonstrate that the space of all
IVIFSs with the order $\leq _{_{\mathrm{HZX}}}$ is a complete chain.

\begin{theorem}
\label{Completeness} $(\tilde{\Theta},\leq _{_{\mathrm{HZX}}})$ is a
complete chain.
\end{theorem}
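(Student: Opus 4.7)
The plan is to reduce the claim, via Lemma~\ref{Complete-Char}, to showing that every nonempty subset $A\subseteq\tilde{\Theta}$ admits an infimum. Theorem~\ref{theo-chain} already provides that $\leq _{_{\mathrm{HZX}}}$ is a total order, and Remark~\ref{Remark-1} supplies bottom and top elements, so the lemma will then deliver the complete chain property.

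To build $\inf A$, I would imitate the lexicographic recipe defining $\leq _{_{\mathrm{HZX}}}$ and minimize the coordinates in the order $S,H,E_{2},E_{3}$. First set $s^{\ast}=\inf\{S(\alpha):\alpha\in A\}$. If $s^{\ast}$ is attained by some $\alpha_{0}\in A$, restrict to the slice $A_{1}=\{\alpha\in A:S(\alpha)=s^{\ast}\}$ and repeat with $H$, then $E_{2}$, then $E_{3}$. If every intermediate infimum is attained inside the corresponding slice, then after at most four such restrictions the final slice collapses to a single point, since $S,H,E_{2},E_{3}$ jointly determine the IVIFN: solving the underlying linear system for $\mu_{\alpha}^{L},\mu_{\alpha}^{R},\nu_{\alpha}^{L},\nu_{\alpha}^{R}$ gives a unique preimage. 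That unique element is $\inf A$.

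The main obstacle is the case in which this chain of attained infima breaks down at some stage. Suppose the break occurs first at the $S$-stage, so $S(\alpha)>s^{\ast}$ for every $\alpha\in A$. Then $\inf A$ cannot lie in $A$; the correct candidate is the $\leq _{_{\mathrm{HZX}}}$-maximum $\gamma^{\ast}$ of the level set $\{\alpha\in\tilde{\Theta}:S(\alpha)=s^{\ast}\}$. I would establish existence of $\gamma^{\ast}$ by identifying $\tilde{\Theta}$ with the compact convex polytope
\[
\left\{(\mu^{L},\mu^{R},\nu^{L},\nu^{R})\in[0,1]^{4}:\mu^{L}\leq\mu^{R},\ \nu^{L}\leq\nu^{R},\ \mu^{R}+\nu^{R}\leq 1\right\}\subset\mathbb{R}^{4},
\]
observing that the level set is a nonempty compact subpolytope, and then successively maximizing the affine functionals $H$, $E_{2}$, $E_{3}$ over progressively smaller compact subsets. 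Injectivity of $\alpha\mapsto(S(\alpha),H(\alpha),E_{2}(\alpha),E_{3}(\alpha))$ guarantees that this process terminates in a unique $\gamma^{\ast}$. A break occurring first at the $H$-, $E_{2}$-, or $E_{3}$-stage is handled analogously, by maximizing the remaining coordinates over the appropriate compact slice.

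Finally, I would verify that $\gamma^{\ast}$ is the greatest lower bound of $A$ by a routine case analysis: any lower bound $\delta$ of $A$ forces $S(\delta)\leq s^{\ast}$, and the cases $S(\delta)<s^{\ast}$ and $S(\delta)=s^{\ast}$ are handled respectively by direct lexicographic comparison with $\gamma^{\ast}$ and by the maximality of $\gamma^{\ast}$ in its slice; subsequent coordinates are treated identically. Altogether this produces an infimum in $\tilde{\Theta}$ for every nonempty $A$, completing the argument.
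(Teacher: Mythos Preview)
Your proposal is correct and follows the same skeleton as the paper: both reduce to Lemma~\ref{Complete-Char} (via Theorem~\ref{theo-chain} and Remark~\ref{Remark-1}) and both proceed lexicographically through $S,H,E_{2},E_{3}$, splitting on whether the running extremum is attained in the current slice. The two executions differ in two respects. First, the paper works with suprema while you work with infima; this is immaterial, since Lemma~\ref{Complete-Char} makes the two equivalent. Second, and more substantively, when the chain of attained values breaks, the paper writes down the candidate explicitly in each subcase (e.g.\ $\beta_{1}=\langle[0,0],[-\xi_{1},-\xi_{1}]\rangle$, $\beta_{2}$, $\beta_{3}$, $\beta_{4}$) and verifies membership in $\tilde{\Theta}$ by approximating sequences and limits, whereas you invoke compactness of the polytope $\tilde{\Theta}\subset\mathbb{R}^{4}$ and successively optimise the remaining affine functionals over nonempty compact slices. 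Your route is shorter and avoids the case split on the sign of $\xi_{1}$ and the piecewise definition of $\beta_{4}$; the paper's route, by contrast, yields closed-form expressions for the supremum in every case, which is more informative if one cares about the actual value of $\sup\Omega$ rather than its mere existence.
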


\begin{proof}
By Theorem \ref{theo-chain}, $(\tilde{\Theta}, \leq_{_{\mathrm{HZX}}})$ is a chain.
Thus, it suffices to check that it is complete. Given a nonempty subset
$\Omega\subset \tilde{\Theta}$, we claim that the smallest upper bound of $\Omega$ exists.

For convenience, let $\mathscr{S}(\Omega)=\{S(\alpha)\mid \alpha\in \Omega\}$
and $\xi_{1}=\sup \mathscr{S}(\Omega)$. Then, we consider the following four cases:

\medskip

(1) $\xi_{1}\notin \mathscr{S}(\Omega)$ and $\xi_{1}\leq 0$. Take $\beta_1=\left\langle
  [0, 0], [-\xi_{1}, -\xi_{1}]\right\rangle$.
  Clearly, $\beta_1\in \tilde{\Theta}$. Meanwhile, for any $\alpha\in \Omega$, it is clear that
  $S(\alpha)< \sup\mathscr{S}(\Omega)=\xi_{1}$ by $\xi_{1}\notin \mathscr{S}(\Omega)$. This, together with
  Definition~\ref{Def-order}, implies that $\alpha<_{_{\mathrm{HZX}}}\beta_1$, i.e., $\beta_1$ is
  an upper bound of $\Omega$. Given an upper bound $\beta=\langle [\mu_{\beta}^{L}, \mu_{\beta}^{R}],
  [\nu_{\beta}^{L}, \nu_{\beta}^{R}]\rangle\in \tilde{\Theta}$ of $\Omega$,
  by Definition~\ref{Def-order}, we have that $S(\beta)\geq S(\alpha)$ holds for all $\alpha\in \Omega$,
  implying that $S(\beta)\geq \sup\{S(\alpha) \mid \alpha\in \Omega\}=\xi_{1}$.
  \begin{itemize}

  \item If $S(\beta)>\xi_{1}=S(\beta_1)$, by Definition~\ref{Def-order}, it is clear that $\beta>_{_{\mathrm{HZX}}}\beta_1$.

  \item If $S(\beta)=\xi_{1}=S(\beta_1)$, by $S(\beta)=\frac{\mu_{\beta}^{L}+\mu_{\beta}^{R}}{2}
    -\frac{\nu_{\beta}^{L}+\nu_{\beta}^{R}}{2}\geq -\frac{\nu_{\beta}^{L}+\nu_{\beta}^{R}}{2}$, we have $H(\beta)=\frac{\mu_{\beta}^{L}+\mu_{\beta}^{R}}{2}
    +\frac{\nu_{\beta}^{L}+\nu_{\beta}^{R}}{2}\geq \frac{\nu_{\beta}^{L}+\nu_{\beta}^{R}}{2}
    \geq -S(\beta)=-\xi_{1}$.
  \begin{itemize}
    \item If $H(\beta)>-\xi_{1}$, by Definition~\ref{Def-order} and $H(\beta_1)=-\xi_{1}$,
      we have $\beta>_{_{\mathrm{HZX}}} \beta_1$.

    \item If $H(\beta)=-\xi_{1}$, noting that $H(\beta_1)=-\xi_{1}$, $E_{2}(\beta_1)=E_{3}(\beta_1)=0$,
      and $E_{2}(\beta)\geq 0$, $E_{3}(\beta)\geq 0$, by Definition~\ref{Def-order}, we have
      $\beta\geq _{_{\mathrm{HZX}}}\beta_1$.
  \end{itemize}
  \end{itemize}

  Therefore, $\beta_1$ is the smallest upper bound of $\Omega$.

  \medskip

  (2) $\xi_{1}\notin \mathscr{S}(\Omega)$ and $\xi_{1}\geq 0$.
  Similarly to the proof of (1), it can be verified
  that $\beta_2=\langle[\xi_{1}, \xi_{1}], [0, 0]\rangle$
  is the smallest upper bound of $\Omega$.

  \medskip

  (3) $\xi_{1}\in \mathscr{S}(\Omega)$ and $\xi_{1}\leq 0$. This implies that
  $\bar{\Omega}=\{\alpha\in \Omega\mid S(\alpha)=\xi_{1}\}\neq \varnothing$. Then,
  let us take $\xi_{2}=\sup \{H(\alpha) \mid \alpha\in \bar{\Omega}\}$.

  \medskip

  3.1) If $\xi_{2}\notin \{H(\alpha) \mid \alpha\in \bar{\Omega}\}$, then, for any
  $n\in \mathbb{N}$, there exists $\alpha_n\in \bar{\Omega}$ such that
  $\xi_2-\frac{1}{n}<H(\alpha_n)<\xi_2$,  
  i.e.,
    $$
    \begin{cases}
    S(\alpha_n)=\frac{\mu_{\alpha_n}^{L}+\mu_{\alpha_n}^{R}}{2}
    -\frac{\nu_{\alpha_n}^{L}+\nu_{\alpha_n}^{R}}{2}=\xi_1,\\
    \xi_2-\frac{1}{n}<H(\alpha_n)=\frac{\mu_{\alpha_n}^{L}+\mu_{\alpha_n}^{R}}{2}
    +\frac{\nu_{\alpha_n}^{L}+\nu_{\alpha_n}^{R}}{2}<\xi_2.
    \end{cases}
    $$
    This, together with $\alpha_n\in \tilde{\Theta}$, implies that
    $$
    \xi_2=\lim_{n\to +\infty}\left(\frac{\mu_{\alpha_n}^{L}+\mu_{\alpha_n}^{R}}{2}
    +\frac{\nu_{\alpha_n}^{L}+\nu_{\alpha_n}^{R}}{2}\right)
    \leq \lim_{n\to +\infty}\left(\mu_{\alpha_n}^{R}+\nu_{\alpha_n}^{R}\right)\leq 1,
    $$
    $$
    \frac{\xi_{1}+\xi_{2}}{2}=\lim_{n\to +\infty}\frac{S(\alpha_n)+H(\alpha_n)}{2}
    =\lim_{n\to +\infty}\frac{\mu_{\alpha_n}^{L}+\mu_{\alpha_n}^{R}}{2}\in [0, 1],
    $$
    and
    $$
    \frac{\xi_{2}-\xi_{1}}{2}=\lim_{n\to +\infty}\frac{H(\alpha_n)-S(\alpha_n)}{2}
    =\lim_{n\to +\infty}\frac{\nu_{\alpha_n}^{L}+\nu_{\alpha_n}^{R}}{2}\in [0, 1],
    $$
    and thus $\beta_2=\left\langle\left[\frac{\xi_{1}+\xi_{2}}{2},
    \frac{\xi_{1}+\xi_{2}}{2}\right], \left[\frac{\xi_{2}-\xi_{1}}{2},
    \frac{\xi_{2}-\xi_{1}}{2}\right] \right\rangle \in \tilde{\Theta}$.
    By direct calculation, we have
    $S(\beta_2)=\xi_{1}$ and $H(\beta_2)=\xi_{2}$. For any $\alpha\in \Omega$,
    from the choice of $\xi_{1}$,
    it follows that $S(\alpha)\leq \xi_{1}=S(\beta_2)$.
    \begin{itemize}
      \item If $S(\alpha)<S(\beta_2)$, by Definition~\ref{Def-order}, it is clear that $\alpha
      <_{_{\mathrm{HZX}}}\beta_2$.

      \item If $S(\alpha)=\xi_{1}$, i.e., $\alpha\in \bar{\Omega}$, by the choice of $\xi_{2}$ and
      $\xi_{2}\notin \{H(\alpha) \mid \alpha\in \bar{\Omega}\}$, we have
      $H(\alpha)<\xi_{2}=H(\beta_2)$, and thus $\alpha<_{_{\mathrm{HZX}}} \beta_2$
      by Definition~\ref{Def-order}.
    \end{itemize}
    These imply that $\beta_2$ is an upper bound of $\Omega$.
    Given an upper bound
    $\beta=\langle [\mu_{\beta}^{L}, \mu_{\beta}^{R}], [\nu_{\beta}^{L}, \nu_{\beta}^{R}]\rangle
    \in \tilde{\Theta}$ of $\Omega$, by Definition~\ref{Def-order},
    it is clear that $S(\beta)\geq \xi_{1}$.
    \begin{itemize}
      \item If $S(\beta)>\xi_{1}$, by Definition~\ref{Def-order} and $S(\beta_2)=\xi_{1}$,
      it is clear that $\beta>_{_{\mathrm{HZX}}} \beta_2$.

      \item If $S(\beta)=\xi_{1}$, for any $\alpha\in \bar{\Omega}$,
      by $\beta\geq_{_{\mathrm{HZX}}} \alpha$ and $S(\beta)=S(\alpha)$,
      then $H(\beta)\geq H(\alpha)$, and thus $H(\beta)
      \geq \sup\{H(\alpha) \mid \alpha\in \bar{\Omega}\}=\xi_{2}=H(\beta_2)$.
      This, together with $E_2(\beta_2)
      =0\leq E_{2}(\beta)$ and $E_{3}(\beta_2)=0\leq E_{3}(\beta)$, implies that
      $\beta \geq_{_{\mathrm{HZX}}} \beta_2$.
    \end{itemize}
    Therefore, $\beta_2$ is the smallest upper bound of $\Omega$.

    \medskip

    3.2) If $\xi_{2}\in \{H(\alpha) \mid \alpha\in \bar{\Omega}\}$, i.e.,
    $\bar{\Omega}_1=\{\alpha\in \bar{\Omega} \mid H(\alpha)=\xi_{2}\}\neq \varnothing$,
   then let us take $\xi_{3}=\sup\{E_{2}(\alpha) \mid \alpha\in \bar{\Omega}_1\}$.

   \medskip

   3.2.1) If $\xi_{3}\in \{E_{2}(\alpha) \mid \alpha\in \bar{\Omega}_1\}$, i.e.,
     $\{\alpha\in \bar{\Omega}_1 \mid E_{2}(\alpha)=\xi_{3}\}\neq \varnothing$,
     then we take
     $\bar{\Omega}_2=\{\alpha\in \bar{\Omega}_1 \mid E_{2}(\alpha)=\xi_{3}\}$ and
     $\xi_{4}=\sup\{E_{3}(\alpha) \mid \alpha\in \bar{\Omega}_2\}$, and consider
     the following two subcases:

     \medskip

     i) If $\xi_{4}\notin \{E_{3}(\alpha) \mid \alpha\in \bar{\Omega}_2\}$, then we choose
     $$
     \beta_3=\left\langle\left[\frac{\xi_{1}+\xi_{2}-\xi_{4}}{2},
     \frac{\xi_{1}+\xi_{2}+\xi_{4}}{2}\right], \left[\frac{\xi_{2}-\xi_{1}}{2}-\frac{2\xi_{3}-\xi_{4}}{2},
     \frac{\xi_{2}-\xi_{1}}{2}+\frac{2\xi_{3}-\xi_{4}}{2}\right]\right\rangle.
     $$
     \begin{claim}\label{Claim-1}
     $\beta_3\in \tilde{\Theta}$.
     \end{claim}

     {\bf Proof of Claim~\ref{Claim-1}:}
     By $\xi_{4}=\sup\{E_{3}(\alpha) \mid \alpha\in \bar{\Omega}_2\}
     \notin \{E_{3}(\alpha) \mid \alpha\in \bar{\Omega}_2\}$,
     we have that, for any $n\in \mathbb{N}$, there exists
     $\alpha_{n}=\langle[\mu_{\alpha_n}^{L}, \mu_{\alpha_n}^{R}],
     [\nu_{\alpha_n}^{L}, \nu_{\alpha_n}^{R}]\rangle\in \bar{\Omega}_2$
     (i.e., $\alpha_{n}\in \bar{\Omega}_1$
     and $E_{2}(\alpha_n)=\xi_{3}$) such that
     \begin{equation}
     \label{eq-Wu-sup}
     \xi_{4}-\frac{1}{n}<E_{3}(\alpha_n)
     =\mu_{\alpha_n}^{R}-\mu_{\alpha_n}^{L}<
     \xi_{4}, \text{ i.e., } \lim_{n\to +\infty}E_{3}(\alpha_n)=\xi_{4}.
     \end{equation}
     From $\langle[\mu_{\alpha_n}^{L}, \mu_{\alpha_n}^{R}],
     [\nu_{\alpha_n}^{L}, \nu_{\alpha_n}^{R}]\rangle\in \bar{\Omega}_1$, it follows that
     $$
   \begin{cases}
   \frac{\mu_{\alpha_n}^{L}+\mu_{\alpha_n}^{R}}{2}-\frac{\nu_{\alpha_n}^{L}+\nu_{\alpha_n}^{R}}{2}=\xi_{1},\\
   \frac{\mu_{\alpha_n}^{L}+\mu_{\alpha_n}^{R}}{2}+\frac{\nu_{\alpha_n}^{L}+\nu_{\alpha_n}^{R}}{2}=\xi_{2},
     \end{cases}
     $$
     and thus
     $$
     \frac{\mu_{\alpha_n}^{L}+\mu_{\alpha_n}^{R}}{2}=\frac{\xi_{1}+\xi_{2}}{2}
     \text{ and } \frac{\nu_{\alpha_n}^{L}+\nu_{\alpha_n}^{R}}{2}=\frac{\xi_{2}-\xi_{1}}{2}.
     $$
     This, together with $E_{2}(\alpha_n)=
     \frac{\mu_{\alpha_n}^{R}-\mu_{\alpha_n}^{L}
     +\nu_{\alpha_n}^{R}-\nu_{\alpha_n}^{L}}{2}=\xi_{3}$ and
     $E_{3}(\alpha_n)=\mu_{\alpha_n}^{R}-\mu_{\alpha_n}^{L}$,
     implies that
     $$
     [\mu_{\alpha_n}^{L}, \mu_{\alpha_n}^{R}]
     =\left[\frac{\xi_{1}+\xi_{2}}{2}-\frac{E_{3}(\alpha_n)}{2},
     \frac{\xi_{1}+\xi_{2}}{2}+\frac{E_{3}(\alpha_n)}{2}\right]\subset [0, 1],
     $$
     and
     $$
     [\nu_{\alpha_n}^{L}, \nu_{\alpha_n}^{R}]
     =\left[\frac{\xi_{2}-\xi_{1}}{2}-\frac{2\xi_{3}-E_{3}(\alpha_n)}{2},
     \frac{\xi_{2}-\xi_{1}}{2}+\frac{2\xi_{3}-E_{3}(\alpha_n)}{2}\right] \subset [0, 1].
     $$
     By $\alpha_n=\langle[\mu_{\alpha_n}^{L}, \mu_{\alpha_n}^{R}],
     [\nu_{\alpha_n}^{L}, \nu_{\alpha_n}^{R}]\rangle\in \tilde{\Theta}$, we have
     $$
     \frac{\xi_{1}+\xi_{2}}{2}-\frac{E_{3}(\alpha_n)}{2}\geq 0,
     $$
     $$
     \frac{\xi_{2}-\xi_{1}}{2}-\frac{2\xi_{3}-E_{3}(\alpha_n)}{2}\geq 0,
     $$
     and
     $$
     \left[\frac{\xi_{1}+\xi_{2}}{2}+\frac{E_{3}(\alpha_n)}{2}\right]+
     \left[\frac{\xi_{2}-\xi_{1}}{2}+\frac{2\xi_{3}-E_{3}(\alpha_n)}{2}\right]
     =\xi_{2}+\xi_{3} \leq 1.
     $$
     These, together with formula~\eqref{eq-Wu-sup}, imply that
     \begin{equation}
     \label{eq-Wu-2}
     \frac{\xi_{1}+\xi_{2}}{2}-\frac{\xi_{4}}{2}
     =\lim_{n\to +\infty}\left(\frac{\xi_{1}+\xi_{2}}{2}
     -\frac{E_{3}(\alpha_n)}{2}\right)
     \geq 0,
     \end{equation}
     \begin{equation}
     \label{eq-Wu-3}
     \frac{\xi_{2}-\xi_{1}}{2}-\frac{2\xi_{3}-\xi_{4}}{2}=
     \lim_{n\to +\infty}\left(\frac{\xi_{2}-\xi_{1}}{2}-
     \frac{2\xi_{3}-E_{3}(\alpha_n)}{2}\right)\geq 0,
     \end{equation}
     and
     \begin{equation}
     \label{eq-Wu-4}
     \frac{\xi_{1}+\xi_{2}+\xi_{4}}{2}+\left(
     \frac{\xi_{2}-\xi_{1}}{2}+\frac{2\xi_{3}-\xi_{4}}{2}\right)
     =\xi_{2}+\xi_{3} \leq 1,
     \end{equation}
     i.e., $\beta_3\in \tilde{\Theta}$.

     \medskip

     By direct calculation, we have
     \begin{claim}
     \label{Claim-2}
     $S(\beta_3)=\xi_{1}$, $H(\beta_3)=\xi_{2}$,
     $E_{2}(\beta_3)=\xi_{3}$, and $E_{3}(\beta_3)=\xi_{4}$.
     \end{claim}

     \begin{claim}
     \label{Claim-3}
     $\beta_3$ is an upper bound of $\Omega$.
     \end{claim}

     {\bf Proof of Claim~\ref{Claim-3}:}
     For any $\alpha\in \Omega$,
     \begin{itemize}
       \item If $\alpha\in \Omega\setminus \bar{\Omega}$, by the choice of $\bar{\Omega}$,
       we have $S(\alpha)<\xi_{1}=S(\beta_3)$ (by Claim~\ref{Claim-2}), and thus
       $\alpha<_{_{\mathrm{HZX}}} \beta_3$ by Definition~\ref{Def-order}.

       \item If $\alpha\in \bar{\Omega}\setminus \bar{\Omega}_1$, by the choices of $\bar{\Omega}$
       and $\bar{\Omega}_1$, we have $S(\alpha)=\xi_{1}=S(\beta_3)$ and $H(\alpha)<\xi_{2}=H(\beta_3)$
        (by Claim~\ref{Claim-2}), and thus $\alpha<_{_{\mathrm{HZX}}} \beta_3$ by Definition~\ref{Def-order}.

       \item If $\alpha\in \bar{\Omega}_1\setminus \bar{\Omega}_2$, by the choices of $\bar{\Omega}_1$
       and $\bar{\Omega}_2$, we have $S(\alpha)=\xi_{1}=S(\beta_3)$, $H(\alpha)=\xi_{2}=H(\beta_3)$, and $E_{2}(\alpha)<\xi_{3}=E_{2}(\beta_3)$ (by Claim~\ref{Claim-2}), and thus $\alpha<_{_{\mathrm{HZX}}} \beta_3$ by Definition~\ref{Def-order}.

       \item If $\alpha\in \bar{\Omega}_2$, by the choices of $\bar{\Omega}_2$ and $\xi_{4}$, we have $S(\alpha)=\xi_{1}=S(\beta_3)$, $H(\alpha)=\xi_{2}=H(\beta_3)$,  $E_{2}(\alpha)=\xi_{3}=E_{2}(\beta_3)$, and $E_{3}(\alpha)<\xi_{4}=E_{3}(\beta_3)$ (by Claim~\ref{Claim-2}). and thus $\alpha<_{_{\mathrm{HZX}}} \beta_3$ by Definition~\ref{Def-order}.
     \end{itemize}

     \begin{claim}
     \label{Claim-4}
     $\beta_3$ is the smallest upper bound of $\Omega$.
     \end{claim}

     {\bf Proof of Claim~\ref{Claim-4}:} Given an upper bound
    $\beta=\langle[\mu_{\beta}^{L}, \mu_{\beta}^{R}], [\nu_{\beta}^{L}, \nu_{\beta}^{R}]\rangle
    \in \tilde{\Theta}$ of $\Omega$, by Definition~\ref{Def-order},
    it is clear that $S(\beta)\geq \xi_{1}$.
    \begin{itemize}
      \item If $S(\beta)>\xi_{1}$, by Definition~\ref{Def-order} and $S(\beta_3)=\xi_{1}$,
      it is clear that $\beta>_{_{\mathrm{HZX}}} \beta_3$.
      \item If $S(\beta)=\xi_{1}$, for $\alpha\in \bar{\Omega}$,
       by $\beta\geq_{_{\mathrm{HZX}}} \alpha$, then $H(\beta)\geq H(\alpha)$,
       and thus $H(\beta) \geq \sup\{H(\alpha) \mid \alpha\in \bar{\Omega}\}=\xi_{2}=H(\beta_3)$.
      \medskip
      \begin{itemize}
      \item If $H(\beta)> H(\beta_3)$, it is clear that $\beta >_{_{\mathrm{HZX}}} \beta_3$
      by Definition~\ref{Def-order}.

      \medskip

      \item If $H(\beta)=H(\beta_3)$, by $\beta\geq_{_{\mathrm{HZX}}} \alpha$
      for all $\alpha\in \bar{\Omega}_1$, we have $E_{2}(\beta)\geq \sup\{E_{2}(\alpha) \mid
      \alpha\in \bar{\Omega}_1\}=\xi_{3}=E_{2}(\beta_3)$.

      \begin{itemize}

       \item If $E_{2}(\beta)>E_{2}(\beta_3)$, it is clear that $\beta >_{_{\mathrm{HZX}}} \beta_3$
      by Definition~\ref{Def-order}.

       \medskip

       \item If $E_{2}(\beta)= E_{2}(\beta_3)$, by $\beta\geq_{_{\mathrm{HZX}}} \alpha$
      for all $\alpha\in \bar{\Omega}_2$, we have $E_{3}(\beta)\geq \sup\{E_{3}(\alpha) \mid
      \alpha\in \bar{\Omega}_2\}=\xi_{4}=E_{3}(\beta_3)$, and thus
      $\beta \geq_{_{\mathrm{HZX}}} \beta_3$ by Definition~\ref{Def-order}.
      \end{itemize}
      \end{itemize}

    Therefore, $\beta_3$ is the smallest upper bound of $\Omega$.

    \medskip

       ii) If $\xi_{4}\in \{E_{3}(\alpha) \mid \alpha\in \bar{\Omega}_2\}$, i.e., there exists
       $\alpha\in \bar{\Omega}_2$ such that $E_{3}(\alpha)=\xi_4$, then
       $$
       \begin{cases}
       S(\alpha)=\frac{\mu_{\alpha}^{L}+\mu_{\alpha}^{R}}{2}-\frac{\nu_{\alpha}^{L}+\nu_{\alpha}^{R}}{2}=\xi_1,\\
       H(\alpha)=\frac{\mu_{\alpha}^{L}+\mu_{\alpha}^{R}}{2}+\frac{\nu_{\alpha}^{L}+\nu_{\alpha}^{R}}{2}=\xi_2,\\
       E_{2}(\alpha)=\frac{\mu_{\alpha}^{R}-\mu_{\alpha}^{L}+\nu_{\alpha}^{R}-\nu_{\alpha}^{L}}{2}=\xi_3,\\
       E_{3}(\alpha)=\mu_{\alpha}^{R}-\mu_{\alpha}^{L}=\xi_4,
       \end{cases}
       $$
       i.e.,
       $$
       \begin{bmatrix}
       \frac{1}{2} & \frac{1}{2} & -\frac{1}{2} & -\frac{1}{2}\\
       \frac{1}{2} & \frac{1}{2} & \frac{1}{2} & \frac{1}{2}\\
       -\frac{1}{2} & \frac{1}{2} & -\frac{1}{2} & \frac{1}{2}\\
       -1 & 1 & 0 & 0
       \end{bmatrix}
       \begin{bmatrix}
       \mu_{\alpha}^{L} \\
       \mu_{\alpha}^{R} \\
       \nu_{\alpha}^{L} \\
       \nu_{\alpha}^{R}
       \end{bmatrix}
       =
       \begin{bmatrix}
       \xi_1 \\
       \xi_2 \\
       \xi_3 \\
       \xi_4
       \end{bmatrix}.
       $$
       Since $
       \begin{vmatrix}
       \frac{1}{2} & \frac{1}{2} & -\frac{1}{2} & -\frac{1}{2}\\
       \frac{1}{2} & \frac{1}{2} & \frac{1}{2} & \frac{1}{2}\\
       -\frac{1}{2} & \frac{1}{2} & -\frac{1}{2} & \frac{1}{2}\\
       -1 & 1 & 0 & 0
       \end{vmatrix}=1\neq 0$, we obtain that such $\alpha$
       ($\alpha\in \bar{\Omega}_2$ and $E_{3}(\alpha)=\xi_4$)
       is unique, which is denoted by $\hat{\alpha}$.
       Under this condition, it can be verified that $\hat{\alpha}$ is the
        maximum of $\Omega$, and thus it is the smallest upper bound of $\Omega$.
     \end{itemize}

     3.2.2) If $\xi_{3}\notin \{E_{2}(\alpha) \mid \alpha\in \bar{\Omega}_1\}$, by
     $\bar{\Omega}_1\neq \varnothing$, there exists $\alpha\in \bar{\Omega}$ such that
     $H(\alpha)=\xi_2$, i.e.,
     $$
     \begin{cases}
     S(\alpha)=\frac{\mu_{\alpha}^{L}+\mu_{\alpha}^{R}}{2}-\frac{\nu_{\alpha}^{L}+\nu_{\alpha}^{R}}{2}
     =\xi_1,\\
     H(\alpha)=\frac{\mu_{\alpha}^{L}+\mu_{\alpha}^{R}}{2}+\frac{\nu_{\alpha}^{L}+\nu_{\alpha}^{R}}{2}
     =\xi_2,
     \end{cases}
     $$
     implying that
     $$
     \zeta_1:=\frac{\xi_1+\xi_2}{2}=\frac{\mu_{\alpha}^{L}+\mu_{\alpha}^{R}}{2}\in [0, 1]
     \text{ and }
     \zeta_2:=\frac{\xi_2-\xi_1}{2}=\frac{\nu_{\alpha}^{L}+\nu_{\alpha}^{R}}{2}\in [0, 1].
     $$
     Clearly, $\zeta_1\leq \zeta_2$ and $\zeta_1+\zeta_2
     =\frac{\mu_{\alpha}^{L}+\mu_{\alpha}^{R}}{2}+\frac{\nu_{\alpha}^{L}+\nu_{\alpha}^{R}}{2}
     \leq \mu_{\alpha}^{R}+\nu_{\alpha}^{R}\leq 1$.

      \begin{claim}
      \label{Claim-5}
      $\zeta_1, \zeta_2, \xi_3\geq 0$, $\zeta_1+\zeta_2+\xi_3\leq 1$,
      and $\zeta_1+\zeta_2-\xi_3\geq 0$.
      \end{claim}

     {\bf Proof of Claim~\ref{Claim-5}:} Clearly, $\zeta_1, \zeta_2, \xi_3\geq 0$.
     By $\xi_{3}=\sup\{E_{2}(\alpha) \mid \alpha\in \bar{\Omega}_1\}
     \notin \{E_{2}(\alpha) \mid \alpha\in \bar{\Omega}_1\}$,
     we have that, for any $n\in \mathbb{N}$, there exists $\alpha_n\in \bar{\Omega}_1$ such that
     $\xi_3-\frac{1}{n}< E_{2}(\alpha_n)<\xi_3$, implying that
       $$
       \frac{(\mu_{\alpha_n}^{L}+\mu_{\alpha_n}^{R})
       +(\nu_{\alpha_n}^{L}+\nu_{\alpha_n}^{R})}{2}=\xi_2=\zeta_1+\zeta_2
       \text{ (by $\alpha_n\in \bar{\Omega}_1$)},
       $$
       and
       $$
       \xi_3-\frac{1}{n}<E_{2}(\alpha_n)=\frac{(\mu_{\alpha_n}^{R}-\mu_{\alpha_n}^{L})
       +(\nu_{\alpha_n}^{R}-\nu_{\alpha_n}^{L})}{2}<\xi_3,
       $$
       and thus
       \begin{align*}
       \xi_3-\frac{1}{n}+(\zeta_1+\zeta_2)
       &<\left[\frac{(\mu_{\alpha_n}^{L}+\mu_{\alpha_n}^{R})
       +(\nu_{\alpha_n}^{L}+\nu_{\alpha_n}^{R})}{2}
       +\frac{(\mu_{\alpha_n}^{R}-\mu_{\alpha_n}^{L})
       +(\nu_{\alpha_n}^{R}-\nu_{\alpha_n}^{L})}{2}\right]\\
       &=\mu_{\alpha_n}^{R}+\nu_{\alpha_n}^{R}<\xi_3+(\zeta_1+\zeta_2),
       \end{align*}
       and
       \begin{align*}
       0\leq \mu_{\alpha_n}^{L}+\nu_{\alpha_n}^{L}&=\left[\frac{(\mu_{\alpha_n}^{L}+\mu_{\alpha_n}^{R})
       +(\nu_{\alpha_n}^{L}+\nu_{\alpha_n}^{R})}{2}-\frac{(\mu_{\alpha_n}^{R}-\mu_{\alpha_n}^{L})
       +(\nu_{\alpha_n}^{R}-\nu_{\alpha_n}^{L})}{2}\right]\\
       &<(\zeta_1+\zeta_2)-\left(\xi_3-\frac{1}{n}\right).
       \end{align*}
       By $\mu_{\alpha_n}^{R}+\nu_{\alpha_n}^{R}\leq 1$
       ($n\in \mathbb{N}$), letting $n\to +\infty$, we have
       $$
       1\geq \lim_{n\to +\infty}(\mu_{\alpha_n}^{R}+\nu_{\alpha_n}^{R})=\xi_3+(\zeta_1+\zeta_2),
       $$
       and
       $$
       \zeta_1+\zeta_2-\xi_3\geq 0.
       $$

Let us choose
$$
\beta_4=
\begin{cases}
\big\langle [\zeta_1, \zeta_1], [\zeta_2-\xi_3, \zeta_2+\xi_3]\big\rangle, & \zeta_2-\xi_3\geq 0,\\
\big\langle [\zeta_1-(\xi_3-\zeta_2), \zeta_1+(\xi_3-\zeta_2)], [0, 2\zeta_2]\big\rangle, & \zeta_2-\xi_3< 0.
\end{cases}
$$

      \begin{claim}
      \label{Claim-6}
      $\beta_4\in \tilde{\Theta}.$
      \end{claim}

     {\bf Proof of Claim~\ref{Claim-6}:} If $\zeta_2-\xi_3\geq 0$, by Claim~\ref{Claim-5},
      we have $[\zeta_1, \zeta_1] \subset
     [0, 1]$, $[\zeta_2-\xi_3, \zeta_2+\xi_3]\subset [0, 1]$, and
     $\zeta_1+(\zeta_2+\xi_3)\leq 1$, and thus $\beta_4\in \tilde{\Theta}$.

     If $\zeta_2-\xi_3< 0$, by Claim~\ref{Claim-5},
      we have $0\leq \zeta_1-(\xi_3-\zeta_2)\leq
      \zeta_1+(\xi_3-\zeta_2)\leq \zeta_1+(\xi_3+\zeta_2)\leq 1$,
      $2\zeta_2\leq \zeta_2+\xi_3\leq \zeta_1+\zeta_2+\xi_3\leq 1$,
      i.e., $[\zeta_1-(\xi_3-\zeta_2), \zeta_1+(\xi_3-\zeta_2)]
      \subset [0, 1]$, $[0, 2\zeta_2]\subset [0, 1]$; and
      $\zeta_1+(\xi_3-\zeta_2)+2\zeta_2=\zeta_1+\zeta_2+\xi_3\leq 1$, and thus
      $\beta_4\in \tilde{\Theta}$.

      \medskip

      By direct calculation, we have
      $S(\beta_4)=\xi_1$, $H(\beta_4)=\xi_2$, and $E_{2}(\beta_4)=\xi_3$.
      Similarly to the proof of Claim~\ref{Claim-3}, it can be verified that

     \begin{claim}
     $\beta_4$ is an upper bound of $\Omega$.
     \end{claim}

     \begin{claim}
     \label{Claim-7}
     $\beta_4$ is the smallest upper bound of $\Omega$.
     \end{claim}

     {\bf Proof of Claim~\ref{Claim-7}:} Given an upper bound
    $\beta=\langle[\mu_{\beta}^{L}, \mu_{\beta}^{R}], [\nu_{\beta}^{L}, \nu_{\beta}^{R}]\rangle
    \in \tilde{\Theta}$ of $\Omega$, by Definition~\ref{Def-order},
    it is clear that $S(\beta)\geq \xi_{1}$.
    \begin{itemize}
      \item If $S(\beta)>\xi_{1}$, by Definition~\ref{Def-order} and $S(\beta_4)=\xi_{1}$,
      it is clear that $\beta>_{_{\mathrm{HZX}}} \beta_4$.
      \item If $S(\beta)=\xi_{1}$, for any $\alpha\in \bar{\Omega}$, by $\beta\geq_{_{\mathrm{HZX}}} \alpha$,
      then $H(\beta)\geq H(\alpha)$, and thus $H(\beta) \geq \sup\{H(\alpha) \mid \alpha\in \bar{\Omega}\}=\xi_{2}=H(\beta_4)$.

      \begin{itemize}
      \item If $H(\beta)> H(\beta_4)$, it is clear that $\beta >_{_{\mathrm{HZX}}} \beta_4$
      by Definition~\ref{Def-order}.

      \item If $H(\beta)=H(\beta_4)$, by $\beta\geq_{_{\mathrm{HZX}}} \alpha$
      for all $\alpha\in \bar{\Omega}_1$, we have $E_{2}(\beta)\geq \sup\{E_{2}(\alpha) \mid
      \alpha\in \bar{\Omega}_1\}=\xi_{3}=E_{2}(\beta_4)$.
      \begin{itemize}

       \item If $E_{2}(\beta)>E_{2}(\beta_4)$, it is clear that $\beta >_{_{\mathrm{HZX}}} \beta_4$
      by Definition~\ref{Def-order}.

       \item If $E_{2}(\beta)= E_{2}(\beta_4)$, by $S(\beta)=\xi_1$ and $H(\beta)=\xi_2$,
       then $\frac{\mu_{\beta}^{L}+\mu_{\beta}^{R}}{2}
       =\zeta_1$, $\frac{\nu_{\beta}^{L}+\nu_{\beta}^{R}}{2}=\zeta_2$, and $\frac{\mu_{\beta}^{R}-\mu_{\beta}^{L}}{2}+\frac{\nu_{\beta}^{R}-\nu_{\beta}^{L}}{2}=\xi_3$.
       Therefore, we can assume that $[\mu_{\beta}^{L}, \mu_{\beta}^{R}]=
       [\zeta_1-l_1, \zeta_1+l_1]\subset [0, 1]$ and $[\nu_{\beta}^{L}, \nu_{\beta}^{R}]=
       [\zeta_2-l_2, \zeta_1+l_2]\subset [0, 1]$, where $l_1$, $l_2\geq 0$ and $l_1+l_2=\xi_3$.
       Consider the following two subcases:
       \begin{itemize}
          \item If $\zeta_2-\xi_3\geq 0$, by the choice of $\beta_4$, we have
          $E_{3}(\beta_4)=0\leq E_{3}(\beta)$. This, together with $S(\beta_4)
          =S(\beta)$, $H(\beta_4)= H(\beta)$, and $E_{2}(\beta_4)= E_{2}(\beta)$,
          implies that $\beta_4 \leq_{_{\mathrm{HZX}}} \beta$.

          \item If $\zeta_2-\xi_3< 0$, by $\zeta_2-l_2\geq 0$, we have $l_2\leq \zeta_2 <\xi_3$,
          implying that $\xi_3-\zeta_2<\zeta_3-l_2=l_1$. This, together with $\beta_4=
              \langle [\zeta_1-(\xi_3-\zeta_2), \zeta_1+(\xi_3-\zeta_2)], [0, 2\zeta_2]\rangle$,
              implies that $E_{3}(\beta_4)=2(\xi_3-\zeta_2)<2l_1=E_{3}(\beta)$, and thus
              $\beta_4 <_{_{\mathrm{HZX}}} \beta$ since $S(\beta_4)
          =S(\beta)$, $H(\beta_4)= H(\beta)$, and $E_{2}(\beta_4)= E_{2}(\beta)$.
        \end{itemize}
       \end{itemize}
    \end{itemize}
    \end{itemize}

    Therefore, $\beta_4$ is the smallest upper bound of $\Omega$.

    \medskip

  (4) $\xi_{1}\in \mathscr{S}(\Omega)$ and $\xi_{1}\geq 0$.
  Similarly to the proof of (3), it can be verified
  that the smallest upper bound of $\Omega$ exists.

Summing up above, we get that the smallest upper bound of $\Omega$ exists. Hence,
$(\tilde{\Theta}, \leq_{_{\mathrm{HZX}}})$ is a complete lattice by applying
Lemma~\ref{Complete-Char} and Remark~\ref{Remark-1}.
\end{proof}

In the following Theorem \ref{theo-admissible}, we show that the order $\leq
_{_{\mathrm{HZX}}}$ fulfills the order $\subseteq $ introduced in Definition~%
\ref{sub-order}; namely, it is an admissible order on IVIFNs.

\begin{theorem}
\label{theo-admissible}
The order $\leq _{_{\mathrm{HZX}}}$ in Definition~\ref{Def-order} is an admissible order on $%
\tilde{\Theta}$.
\end{theorem}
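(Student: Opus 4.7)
The plan is to exploit the fact that $\leq_{_{\mathrm{HZX}}}$ is already known to be a total order by Theorem~\ref{theo-chain}, so it suffices to verify the refinement condition of Definition of admissible order: whenever $\alpha\subseteq\beta$, one has $\alpha\leq_{_{\mathrm{HZX}}}\beta$. Since $\leq_{_{\mathrm{HZX}}}$ is a lexicographic cascade in $(S,H,E_{2},E_{3})$, my first instinct is to look only at the leading coordinate $S$, and show that this single coordinate already does all the work.

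First I fix an arbitrary pair $\alpha,\beta\in\tilde{\Theta}$ with $\alpha\subseteq\beta$ and expand the score difference:
\[
S(\beta)-S(\alpha)=\frac{(\mu_{\beta}^{L}-\mu_{\alpha}^{L})+(\mu_{\beta}^{R}-\mu_{\alpha}^{R})+(\nu_{\alpha}^{L}-\nu_{\beta}^{L})+(\nu_{\alpha}^{R}-\nu_{\beta}^{R})}{2}.
\]
By Definition~\ref{sub-order} all four numerators are nonnegative, so $S(\alpha)\leq S(\beta)$. If the inequality is strict, the first clause of Definition~\ref{Def-order} immediately delivers $\alpha<_{_{\mathrm{HZX}}}\beta$. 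If equality holds, a sum of four nonnegative quantities vanishes, so each difference must individually be zero; this forces $\mu_{\alpha}^{L}=\mu_{\beta}^{L}$, $\mu_{\alpha}^{R}=\mu_{\beta}^{R}$, $\nu_{\alpha}^{L}=\nu_{\beta}^{L}$, $\nu_{\alpha}^{R}=\nu_{\beta}^{R}$, that is, $\alpha=\beta$, which again gives $\alpha\leq_{_{\mathrm{HZX}}}\beta$.

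I do not anticipate any substantive obstacle here. The higher-order tiebreakers $H$, $E_{2}$, $E_{3}$ never need to be invoked, because either $\alpha$ and $\beta$ are already separated at the score level or the componentwise inequalities collapse the pair to equality. The whole argument reduces to a one-line algebraic identity followed by a two-case dichotomy, so the proof should fit in a short paragraph.
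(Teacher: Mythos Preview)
Your argument is correct. Both you and the paper begin by writing $S(\beta)-S(\alpha)$ as a sum of four nonnegative terms and dispose of the strict case immediately. The difference lies in how the equality case $S(\alpha)=S(\beta)$ is handled: the paper does not exploit that this equation already forces all four component differences to vanish, and instead passes to the next tiebreaker $H$, shows $H(\alpha)\leq H(\beta)$, and only after a further split into $H(\alpha)<H(\beta)$ and $H(\alpha)=H(\beta)$ does it derive $\mu_{\alpha}^{L}=\mu_{\beta}^{L}$, $\mu_{\alpha}^{R}=\mu_{\beta}^{R}$, $\nu_{\alpha}^{L}=\nu_{\beta}^{L}$, $\nu_{\alpha}^{R}=\nu_{\beta}^{R}$. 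Your observation that $S(\beta)-S(\alpha)$ is itself a sum of the four individual nonnegative differences lets you skip the $H$-layer entirely: once that sum is zero, each summand is zero and $\alpha=\beta$. The upshot is the same conclusion, but your route is shorter and makes clear that the higher tiebreakers $H,E_{2},E_{3}$ are never actually needed for admissibility.
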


\begin{proof}
Based on Theorem {\ref{theo-chain}, }$\leq _{_{\mathrm{HZX}}}$ is a total
order{\ }on $\tilde{\Theta}.$ For two IVIFNs $\alpha =\langle \lbrack \mu
_{\alpha }^{L},\mu _{\alpha }^{R}],[\nu _{\alpha }^{L},\nu _{\alpha
}^{R}]\rangle $ and $\beta =\langle \lbrack \mu _{\beta }^{L},\mu _{\beta
}^{R}],[\nu _{\beta }^{L},\nu _{\beta }^{R}]\rangle ,$ let $\alpha \subseteq
\beta .$ By Definition {\ref{sub-order}, there holds} $\mu _{\alpha }^{L}\leq
\mu _{\beta }^{L}$, $\mu _{\alpha }^{R}\leq \mu _{\beta }^{R}$, $\nu
_{\alpha }^{L}\geq \nu _{\beta }^{L}$, and $\nu _{\alpha }^{R}\geq \nu
_{\beta }^{R}.$ Then, we have that
\begin{align*}
S(\alpha )-S(\beta ) & =\left( \frac{\mu _{\alpha }^{L}+\mu _{\alpha }^{R}}{2%
}-\frac{\nu _{\alpha }^{L}+\nu _{\alpha }^{R}}{2}\right) -\left( \frac{\mu
_{\beta }^{L}+\mu _{\beta }^{R}}{2}-\frac{\nu _{\beta }^{L}+\nu _{\beta }^{R}%
}{2}\right) \\
& =\frac{\mu _{\alpha }^{L}-\mu _{\beta }^{L}+\mu _{\alpha }^{R}-\mu _{\beta
}^{R}+\nu _{\beta }^{L}-\nu _{\alpha }^{L}+\nu _{\beta }^{R}-\nu _{\alpha
}^{R}}{2}\leq 0.%
\end{align*}%

Case 1. Let $S(\alpha )-S(\beta )<0.$ Then, there holds $S(\alpha )<S(\beta
).$ By Definition {\ref{Def-order}, we have that }$\alpha <_{_{\mathrm{HZX}%
}}\beta.$

Case 2. Let $S(\alpha )-S(\beta )=0.$ Then, there holds $S(\alpha )=S(\beta
).$ In this case,%
\begin{equation*}
S(\alpha )=\frac{\mu _{\alpha }^{L}+\mu _{\alpha }^{R}}{2}-\frac{\nu
_{\alpha }^{L}+\nu _{\alpha }^{R}}{2}=\frac{\mu _{\beta }^{L}+\mu _{\beta
}^{R}}{2}-\frac{\nu _{\beta }^{L}+\nu _{\beta }^{R}}{2}=S(\beta ).
\end{equation*}%
This implies that
\begin{equation}
\nu _{\alpha }^{L}+\nu _{\alpha }^{R}-\nu _{\beta }^{L}-\nu _{\beta
}^{R}=\mu _{\alpha }^{L}+\mu _{\alpha }^{R}-\mu _{\beta }^{L}-\mu _{\beta
}^{R}.  \label{eqn1}
\end{equation}%
Moreover, we have that%
\begin{equation}
\begin{split}
H(\alpha )-H(\beta ) & =\left( \frac{\mu _{\alpha }^{L}+\mu _{\alpha }^{R}}{2%
}+\frac{\nu _{\alpha }^{L}+\nu _{\alpha }^{R}}{2}\right) -\left( \frac{\mu
_{\beta }^{L}+\mu _{\beta }^{R}}{2}+\frac{\nu _{\beta }^{L}+\nu _{\beta }^{R}%
}{2}\right)  \\
& =\frac{\mu _{\alpha }^{L}+\mu _{\alpha }^{R}-\mu _{\beta }^{L}-\mu _{\beta
}^{R}+\nu _{\alpha }^{L}+\nu _{\alpha }^{R}-\nu _{\beta }^{L}-\nu _{\beta
}^{R}}{2}.%
\end{split}
\label{eq2}
\end{equation}%
By the formulas \eqref{eqn1} and \eqref{eq2}, %
\begin{equation}
H(\alpha )-H(\beta )=\mu _{\alpha }^{L}+\mu _{\alpha }^{R}-\mu _{\beta
}^{L}-\mu _{\beta }^{R},  \label{eqn3}
\end{equation}%
which implies that $H(\alpha )-H(\beta)\leq 0.$\newline

Case 2.1. Let $H(\alpha )-H(\beta)<0.$ Then, there holds $H(\alpha
)<H(\beta ).$ By Definition {\ref{Def-order}, we have that }$\alpha <_{_{%
\mathrm{HZX}}}\beta.$

Case 2.2. Let $H(\alpha )-H(\beta )=0.$ By the formula~\eqref{eqn3},
\begin{equation}
\mu _{\alpha }^{L}+\mu _{\alpha }^{R}=\mu _{\beta }^{L}+\mu _{\beta }^{R}.
\label{eqn4}
\end{equation}%
This, together with the formula~\eqref{eqn1}, implies that
\begin{equation}
\nu _{\alpha }^{L}+\nu _{\alpha }^{R}=\nu _{\beta }^{L}+\nu _{\beta }^{R}.
\label{eqn5}
\end{equation}%
By the formulas~\eqref{eqn4} and \eqref{eqn5}, respectively, we obtain that
$\mu _{\alpha }^{L}-\mu _{\beta }^{L}=\mu _{\beta }^{R}-\mu _{\alpha }^{R}$
and $\nu _{\alpha }^{L}-\nu _{\beta }^{L}=\nu _{\beta }^{R}-\nu _{\alpha
}^{R}.$ Since $\mu _{\alpha }^{L}-\mu _{\beta }^{L}\leq 0$ and $\mu _{\beta
}^{R}-\mu _{\alpha }^{R}\geq 0,$ then $\mu _{\alpha }^{L}-\mu _{\beta
}^{L}=\mu _{\beta }^{R}-\mu _{\alpha }^{R}=0.$ That is, $\mu _{\alpha
}^{L}=\mu _{\beta }^{L}$ and $\mu _{\beta }^{R}=\mu _{\alpha }^{R}.$
Similarly, we have that $\nu _{\alpha }^{L}=\nu _{\beta }^{L}$ and $\nu
_{\beta }^{R}=\nu _{\alpha }^{R}.$ Hence, $\alpha =\beta .$

Therefore, we conclude that$\ \alpha \leq _{_{\mathrm{HZX}}}\beta .$
\end{proof}

\begin{remark}
Since $(\tilde{\Theta},\leq _{_{\mathrm{HZX}}})$ is a complete lattice,
 we can establish the decomposition theorem and Zadeh's extension principle
for IVIFSs as follows: for an IVIFS $A$ defined on the universe of discourse $X$,

(Decomposition Theorem) For every $x\in X$,
$$
A(x)=\bigvee\{\alpha\in \tilde{\Theta}
\mid x\in A_{\alpha}\},
$$
where $A_{\alpha}=\{z\in X \mid A(z)\geq_{_{\mathrm{HZX}}}\alpha\}$,
$\vee$ is the supremum under the linear order $\leq_{_{\mathrm{HZX}}}$.

(Zadeh's Extension Principle) Let $X$ and $Y$ be two nonempty sets and
$f: X\rightarrow Y$ be a mapping from $X$ to $Y$. Define a mapping
$\mathfrak{F}: \tilde{\Theta}^{X} \rightarrow \tilde{\Theta}^{Y}$ by
\begin{align*}
\mathfrak{F}: \tilde{\Theta}^{X} &\rightarrow \tilde{\Theta}^{Y}\\
A& \mapsto \mathfrak{F}(A)(y)=
\begin{cases}
\langle [0,0], [1, 1]\rangle, & f^{-1}(\{y\})=\varnothing, \\
\bigvee_{x\in f^{-1}(\{y\})}A(x), & f^{-1}(\{y\})\neq\varnothing,
\end{cases}
\end{align*}
which is called the {\it Zadeh's extension mapping} of $f$ in the sense of
IFSs.
\end{remark}

\section{Algebraic structures of $(\tilde{\Theta},\leq _{_{\mathrm{WLW}}})$}
\label{Sec-4}

Wang et al.~\cite{WLW2009} introduced two additional functions to
investigate the difference between two IVIFNs. In particular, they presented
the membership uncertainty index and the hesitation uncertainty index as
detailed below.

\begin{definition}[{\textrm{\protect\cite[Definition~3.3 and 3.4]{WLW2009}}}]

Let $\alpha =\langle \lbrack \mu _{\alpha }^{L},\mu _{\alpha }^{R}],[\nu
_{\alpha }^{L},\nu _{\alpha }^{R}]\rangle $ be an IVIFN. Define the \textit{%
membership uncertainty index} $T(\_)$ and the \textit{hesitation uncertainty
index} $G(\_)$ of $\alpha $ as
\begin{equation*}
T(\alpha )=(\mu _{\alpha }^{R}-\mu _{\alpha }^{L})-(\nu _{\alpha }^{R}-\nu
_{\alpha }^{L}),
\end{equation*}%
and
\begin{equation*}
G(\alpha )=(\mu _{\alpha }^{R}-\mu _{\alpha }^{L})+(\nu _{\alpha }^{R}-\nu
_{\alpha }^{L}),
\end{equation*}%
respectively.
\end{definition}

By taking a prioritized sequence of score, accuracy, membership uncertainty
index, and hesitation uncertainty index functions, the following procedure
to compare any two IVIFNs was introduced by Wang et al.~\cite{WLW2009}. This
prioritized sequence of the comparison method serves several application
fields in reality. For example, many Canadian research-intensive
institutions recruit their tenure-track faculty members following a priority
order of research first, teaching second, and service last.

\begin{definition}[{\textrm{\protect\cite[Definition 3.5]{WLW2009}}}]
\label{def.order} Let $\alpha _{1}$ and $\alpha _{2}$ be two IVIFNs. Then,
it gets the following ranking principle:
\begin{enumerate}[(1)]
\item If $S(\alpha _{1})<S(\alpha _{2})$, then $\alpha _{1}$ is smaller than
$\alpha _{2}$, denoted by $\alpha _{1}<_{_{\mathrm{WLW}}}\alpha _{2}$;

\item If $S(\alpha _{1})=S(\alpha _{2})$, then

\begin{itemize}
\item $H(\alpha _{1})<H(\alpha _{2})$, then $\alpha _{1}<_{_{\mathrm{WLW}%
}}\alpha _{2}$;

\item $H(\alpha _{1})=H(\alpha _{2})$, then

\begin{itemize}
\item $T(\alpha _{1})<T(\alpha _{2})$, then $\alpha _{1}<_{_{\mathrm{WLW}%
}}\alpha _{2}$;

\item $T(\alpha _{1})=T(\alpha _{2})$, then
\begin{itemize}
\item $G(\alpha _{1})<G(\alpha _{2})$, then $\alpha _{1}<_{_{\mathrm{WLW}}}\alpha
_{2}$;

\item $G(\alpha _{1})=G(\alpha _{2})$, then $\alpha _{1}=\alpha _{2}$.
\end{itemize}
\end{itemize}
\end{itemize}
\end{enumerate}

If $\alpha _{1}<_{_{\mathrm{WLW}}}\alpha _{2}$ or $\alpha _{1}=\alpha _{2}$,
we will denote it by $\alpha _{1}\leq _{_{\mathrm{WLW}}}\alpha _{2}$.
\end{definition}

\begin{remark}
It is easy to see that the relation $\leq _{_{\mathrm{WLW}}}$ introduction
in Definition \ref{def.order} is transitive on IVIFNs. Then, by~\cite[%
Remark~3.1 and Theorem~3.1]{WLW2009}, we get that it is a total order on
IVIFNs.
\end{remark}

In the following Theorem \ref{Completeness-1}, by applying score, accuracy,
membership uncertainty index, and hesitation uncertainty index functions, we
observe that IVIFNs in conjunction with the order $\leq _{_{\mathrm{WLW}}}$
are complete chains.

\begin{theorem}
\label{Completeness-1} $(\tilde{\Theta},\leq _{_{\mathrm{WLW}}})$ is a
complete chain.
\end{theorem}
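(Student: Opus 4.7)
The proof parallels that of Theorem~\ref{Completeness}, substituting $T$ and $G$ for the entropy functions $E_{2}$ and $E_{3}$. By the preceding remark, $\leq_{_{\mathrm{WLW}}}$ is already a total order on $\tilde{\Theta}$, with top $\langle [1,1],[0,0]\rangle$ and bottom $\langle [0,0],[1,1]\rangle$. By Lemma~\ref{Complete-Char}, it thus suffices to exhibit, for every nonempty $\Omega\subset \tilde{\Theta}$, a smallest upper bound.

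The plan is to iterate the hierarchical supremum construction used in Theorem~\ref{Completeness}: set $\xi_{1}=\sup\{S(\alpha)\mid \alpha\in \Omega\}$; if $\xi_{1}$ is attained, restrict to $\bar{\Omega}=\{\alpha\in \Omega\mid S(\alpha)=\xi_{1}\}$ and put $\xi_{2}=\sup\{H(\alpha)\mid \alpha\in \bar{\Omega}\}$; if $\xi_{2}$ is attained, restrict to $\bar{\Omega}_{1}$ and put $\xi_{3}=\sup\{T(\alpha)\mid \alpha\in \bar{\Omega}_{1}\}$; and if $\xi_{3}$ is attained, restrict to $\bar{\Omega}_{2}$ and put $\xi_{4}=\sup\{G(\alpha)\mid \alpha\in \bar{\Omega}_{2}\}$. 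The crucial algebraic fact is that $(S,H,T,G)$ furnishes a nonsingular linear change of coordinates on IVIFNs:
$$
\mu^{L}+\mu^{R}=S+H,\quad \nu^{L}+\nu^{R}=H-S,\quad \mu^{R}-\mu^{L}=\tfrac{T+G}{2},\quad \nu^{R}-\nu^{L}=\tfrac{G-T}{2},
$$
so prescribing all four invariants pins down the IVIFN uniquely. Hence when every supremum is attained, a single element of $\bar{\Omega}_{2}$ realises $(\xi_{1},\xi_{2},\xi_{3},\xi_{4})$ and is the maximum of $\Omega$.

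The subcases in which $\xi_{1}$ or $\xi_{2}$ is not attained depend only on $S$ and $H$, and so are handled by the same degenerate constructions as in cases (1), (2), and 3.1 of Theorem~\ref{Completeness}. The subcase in which $\xi_{3}$ is attained but $\xi_{4}$ is not is resolved by taking $(\alpha_{n})\subset \bar{\Omega}_{2}$ with $G(\alpha_{n})\to \xi_{4}$; since $S,H,T$ are fixed on $\bar{\Omega}_{2}$, the change of coordinates above forces all four interval endpoints of $\alpha_{n}$ to converge, and the limit is the unique IVIFN with the prescribed $(S,H,T,G)$-profile, which lies in $\tilde{\Theta}$ by coordinatewise closure.

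The main obstacle is the subcase $\xi_{3}\notin \{T(\alpha)\mid \alpha\in \bar{\Omega}_{1}\}$, the analogue of subcase 3.2.2 of Theorem~\ref{Completeness}. Writing $\zeta_{1}=(\xi_{1}+\xi_{2})/2$ and $\zeta_{2}=(\xi_{2}-\xi_{1})/2$ and extracting $(\alpha_{n})\subset \bar{\Omega}_{1}$ with $T(\alpha_{n})\to \xi_{3}$, the constraints $\mu^{L}_{\alpha_{n}},\nu^{L}_{\alpha_{n}}\geq 0$ and $\mu^{R}_{\alpha_{n}}+\nu^{R}_{\alpha_{n}}\leq 1$ yield in the limit $\xi_{3}\leq 2\zeta_{1}$ when $\xi_{3}\geq 0$, $-\xi_{3}\leq 2\zeta_{2}$ when $\xi_{3}<0$, and $|\xi_{3}|\leq 2(1-\xi_{2})$ in both regimes. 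Minimising $G$ subject to $T=\xi_{3}$ and the nonnegativity of widths forces $G=|\xi_{3}|$, suggesting the candidate
$$
\beta=\begin{cases}\big\langle [\zeta_{1}-\xi_{3}/2,\zeta_{1}+\xi_{3}/2],\,[\zeta_{2},\zeta_{2}]\big\rangle, & \xi_{3}\geq 0,\\[2pt] \big\langle [\zeta_{1},\zeta_{1}],\,[\zeta_{2}-|\xi_{3}|/2,\zeta_{2}+|\xi_{3}|/2]\big\rangle, & \xi_{3}<0.\end{cases}
$$
The feasibility bounds above guarantee $\beta\in \tilde{\Theta}$, and the smallest-upper-bound property follows the template of Claim~\ref{Claim-7}: any upper bound $\beta'$ with $(S,H,T)(\beta')=(\xi_{1},\xi_{2},\xi_{3})$ automatically satisfies $G(\beta')\geq |T(\beta')|=|\xi_{3}|=G(\beta)$ because $\mu^{R}-\mu^{L},\nu^{R}-\nu^{L}\geq 0$. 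The symmetric case $\xi_{1}\in \mathscr{S}(\Omega)$ with $\xi_{1}\geq 0$ is handled by the same technique, and Lemma~\ref{Complete-Char} then completes the proof.
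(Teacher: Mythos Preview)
Your overall strategy is the same as the paper's (which simply says ``similar to Theorem~\ref{Completeness}''), and your treatment of the deeper subcases---the analogues of 3.2.1 and 3.2.2---is correct and in fact more explicit than what the paper provides. The change-of-coordinates observation is exactly the right organizing idea.

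There is, however, a genuine gap in your handling of the ``shallow'' subcases. You write that cases (1), (2), and 3.1 ``depend only on $S$ and $H$, and so are handled by the same degenerate constructions'' as in Theorem~\ref{Completeness}. This is not correct for the order $\leq_{_{\mathrm{WLW}}}$. In those cases of Theorem~\ref{Completeness}, the candidate (e.g.\ $\beta_{2}=\langle[\zeta_{1},\zeta_{1}],[\zeta_{2},\zeta_{2}]\rangle$ in case~3.1) is the smallest upper bound precisely because $E_{2}(\beta_{2})=E_{3}(\beta_{2})=0$ and $E_{2},E_{3}\geq 0$ for \emph{every} IVIFN, so $\beta_{2}$ is the $\leq_{_{\mathrm{HZX}}}$-minimum of the slice $\{S=\xi_{1},\,H=\xi_{2}\}$. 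But $T$, unlike $E_{2}$, can be negative. Concretely, in case~3.1 with $\zeta_{2}>0$ and $\xi_{2}<1$, the element
\[
\beta'=\big\langle[\zeta_{1},\zeta_{1}],[\zeta_{2}-\varepsilon,\zeta_{2}+\varepsilon]\big\rangle,\qquad 0<\varepsilon\leq\min(\zeta_{2},1-\xi_{2}),
\]
satisfies $S(\beta')=\xi_{1}$, $H(\beta')=\xi_{2}$, and $T(\beta')=-2\varepsilon<0=T(\beta_{2})$, so $\beta'<_{_{\mathrm{WLW}}}\beta_{2}$; yet $\beta'$ is still an upper bound of $\Omega$ since every $\alpha\in\bar{\Omega}$ has $H(\alpha)<\xi_{2}$. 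Thus $\beta_{2}$ is \emph{not} the least upper bound in the WLW order. The same issue arises in cases (1) and (2): e.g.\ for $\xi_{1}\in(-\tfrac{1}{2},0)$ not attained, $\langle[0,0],[0,-2\xi_{1}]\rangle$ is a strictly smaller upper bound than $\langle[0,0],[-\xi_{1},-\xi_{1}]\rangle$.

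The fix is routine but must be stated: in each of these subcases you need the $\leq_{_{\mathrm{WLW}}}$-minimum of the relevant slice, which means pushing $T$ down to its minimum $-\min\bigl(2\zeta_{2},\,2(1-\xi_{2})\bigr)$ (by maximizing $\nu^{R}-\nu^{L}$ with $\mu^{R}-\mu^{L}=0$), and then taking the forced $G=|T|$. Once you supply the correct candidates, the rest of your argument goes through unchanged.
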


\begin{proof}
It is similar to that of{\ Theorem~\ref{Completeness}.}
\end{proof}

\section{Concluding remarks}\label{Sec-5}

Methodologies that rank any two IVIFNs have been studied by many
researchers, such as Xu~\cite{X2007}, Ye~\cite{ye2009}, Nayagam et al.~\cite%
{N2011}, Sahin~\cite{SA2016}, Zhang and Xu~\cite{ZX2017} and Nayagam et al.
\cite{N2017}. Following this purpose, they have introduced various novel
accuracy functions. Nevertheless, their methodologies sometimes cannot
assert the difference between two IVIFNs. Then, Wang et al.~\cite{WLW2009}
and Huang et al.~\cite{HZX2021} have investigated the difference between two
IVIFNs by introducing particular additional functions. Furthermore, they
have proposed complete ranking methods for IVIFNs. The main contributions of
this study are as follows: having regard to a score function and three kinds
of entropy functions, we have shown that IVIFNs with the order in the
comparison approach for IVIFNs introduced by \cite{HZX2021} are complete
chains. Furthermore, we have observed that IVIFNs with the order in the
method for comparing IVIFNs introduced by \cite{WLW2009} are complete chains
by applying score, accuracy, membership uncertainty index, and hesitation
uncertainty index functions.

\section*{References}


\begin{thebibliography}{99}
\bibitem{Ata1986} K.T. Atanassov, Intuitionistic fuzzy sets, Fuzzy Sets
Syst. 20 (1986) 87--96.

\bibitem{Ata1994} K.T. Atanassov, Operators over interval valued
intuitionistic fuzzy sets, Fuzzy Sets Syst. 64 (1994) 159--174.

\bibitem{Ata1999} K.T. Atanassov, Intuitionistic Fuzzy Sets: Theory and
Applications, Studies in Fuzziness and Soft Computing, Vol.~35,
Springer-Verlag Berlin Heidelberg, 1999.

\bibitem{Ata2017} K.T. Atanassov, Type-1 fuzzy sets and intuitionistic fuzzy
sets, Algorithms 10 (2017) 1--12.

\bibitem{Ata2020} K.T. Atanassov, Interval-Valued Intuitionistic Fuzzy Sets,
Studies in Fuzziness and Soft Computing, Vol.~388, Springer International
Publishing, 2020.

\bibitem{AG1989} K.T. Atanassov, G. Gargov, Interval valued intuitionistic
fuzzy sets, Fuzzy Sets Syst. 31 (1989) 343--349.

\bibitem{AR2018} V. Atanassova, O. Roeva, Computational complexity and
influence of numerical precision on the results of intercriteria analysis
in the decision making process, Notes Intuit. Fuzzy Sets 24 (2018) 53--63.

\bibitem{Bir1967} G. Birkhoff, Lattice Theory, American Mathematical Society
Colloquium Publications, Rhode Island, 1948.

\bibitem{CM2008} O. Castillo, P. Melin, Type-2 Fuzzy Logic: Theory and
Applications, Studies in Fuzziness and Soft Computing, Vol.~223,
Springer-Verlag Berlin Heidelberg, 2008.

\bibitem{C2014} O. Castillo, P. Melin, R. Tsvetkov, K.T. Atanassov, Short
Remark on Fuzzy Sets, Interval Type-2 Fuzzy Sets, General Type-2 Fuzzy Sets
and Intuitionistic fuzzy sets, Advances in Intelligent Systems and
Computing, Vol.~322. Springer, Cham., 2015.

\bibitem{GHKLMS2003} G. Gierz, K.H. Hofmann, K. Keimel, J.D. Lawson, M.
Mislove, D.S. Scott, Continuous Lattices and Domains, Encyclopedia of
Mathematics and its Applications, Vol.~93, Cambridge University Press, 2003.

\bibitem{G2017} C.I. Gonzalez, P. Melin, J.R. Castro, O. Castillo, Edge
Detection Methods Based on Generalized Type-2 Fuzzy Logic, SpringerBriefs in
Computational Intelligence, Springer International Publishing, 2017.

\bibitem{HZX2021} W. Huang, F. Zhang, S. Xu, A complete ranking method for
interval-valued intuitionistic fuzzy numbers and its applications to
multicriteria decision making, Soft Comput. 25 (2021) 2513--2520.

\bibitem{K2004} F.O. Karray, C.W. De Silva, Soft Computing and Intelligent
Systems Design: Theory, Tools and Applications, Addison-Wesley, 2004.

\bibitem{DeBFIKM2016}
L.~De~Miguel, H.~Bustince, J.~Fernandez, E.~Indur\'{a}in,
  A.~Koles\'{a}rov\'{a}, and R.~Mesiar, Construction of admissible linear
  orders for interval-valued Atanassov intuitionistic fuzzy sets with an
  application to decision making, Inf. Fusion, 27 (2016) 189--197.

\bibitem{N2017} V.L.G. Nayagam, S. Jeevaraj, P. Dhanasekaran, An
intuitionistic fuzzy multi-criteria decision-making method based on
non-hesitance score for interval-valued intuitionistic fuzzy sets, Soft
Comput. 21 (2017) 7077--7082.

\bibitem{N2011} V.L.G. Nayagam, S. Muralikrishnan, G. Sivaramanc,
Multi-criteria decision making method based on interval-valued
intuitionistic fuzzy sets, Expert Syst. Appl. 38 (2011) 1464--1467.

\bibitem{P2016} P. Ponce-Cruz, A. Molina, B. MacCleery, Fuzzy Logic Type-1
and Type-2 Based on LabVIEW$^{\text{TM}}$ FPGA, Studies in Fuzziness and
Soft Computing, Vol.~334, Springer International Publishing, 2016.

\bibitem{SA2016} R. Sahin, Fuzzy multicriteria decision making method based
on the improved accuracy function for interval-valued intuitionistic fuzzy
sets. Soft Comput. 20 (2016) 2557--2563.

\bibitem{S2006} A. Shannon, E. Sotirova, K. Atanassov, M. Krawczak,
P. Melo-Pinto, T. Kim, L.C. Jang, D. Kang, S.H. Rim, A note on
generalized net model of e-learning evaluation associated with
intuitionistic fuzzy estimations, Int. J. Fuzzy Logic Intell. Syst.
6 (2006) 6--9.


\bibitem{TC2017} Q. Tan, Y. Cai, B. Chen, An enhanced radial interval
programming approach for supporting agricultural production decisions under
dual uncertainties and differential aspirations, J. Clean. Prod. 168 (2017)
189--204.

\bibitem{WLW2009} Z. Wang, K.W. Li, W. Wang, An approach to multiattribute
decision making with interval-valued intuitionistic fuzzy assessments and
incomplete weights, Inf. Sci. 179 (2009) 3026--3040.

\bibitem{X2007} Z. Xu, Methods for aggregating interval-valued
intuitionistic fuzzy information and their application to decision making,
Control and Decision 22 (2007) 215--219.

\bibitem{XC2012} Z. Xu, X. Cai, Intuitionistic Fuzzy Information
Aggregation: Theory and Applications, Vol.~20, Springer-Verlag Berlin
Heidelberg, 2012.

\bibitem{ye2009} J. Ye, Multicriteria fuzzy decision making method based on
a novel accuracy function under interval-valued intuitionistic fuzzy
environment, Expert Syst. Appl. 36 (2009) 6899--6902.

\bibitem{zadeh} L.A. Zadeh, Fuzzy sets, Inform. Control 8 (1965) 338--353.

\bibitem{Z2005} L.A. Zadeh, Toward a generalized theory of uncertainty (GTU)
--an outline, Inf. Sci. 172 (2005) 1--40.

\bibitem{Z2008} L.A. Zadeh, Is there a need for fuzzy logic?, Inf. Sci. 178
(2008) 2751--2779.

\bibitem{ZT2020} T. Zhang, Q. Tan, S. Zhang, S. Wang, T. Gou, A robust
multi-objective model for supporting agricultural water management with
uncertain preferences, J. Clean. Prod. 255 (2020) 1--14.

\bibitem{ZX2017} F. Zhang, S. Xu, Remarks to ``Fuzzy
multicriteria decision making method based on the improved accuracy function
for interval-valued intuitionistic fuzzy sets", Soft
Comput. 21 (2017) 2263--2268.
\end{thebibliography}
\end{document}